\def\Beweisende{\square}             
\def\BewEnde{\hfill{\Beweisende}}
\def\phm{{\hphantom{-}}}
\def\phi{\varphi}
\def\RR{{\mathbb R}}
\def\CC{{\mathbb C}}
\newcommand{\kreuz}{\!\times}
\def\Vkt#1{{\mathbf #1}}
\newcommand{\go}[1]{{\sf #1}}
\begin{document}  

\title*{On the line-symmetry of self-motions of linear pentapods}
\author{Georg Nawratil}
\institute{
  Georg Nawratil \at
  Institute of Discrete Mathematics and Geometry, Vienna University of Technology, Austria, \\
  \email{nawratil@geometrie.tuwien.ac.at}}
\authorrunning{G.\ Nawratil}

\maketitle

\abstract{
We show that all self-motions of pentapods with linear platform of Type 1 and Type 2 can be generated by 
line-symmetric motions. Thus this paper closes a gap between the more than 100 year old works  
of Duporcq and Borel and the extensive study of line-symmetric motions done by Krames in the 1930's. 
As a consequence we also get a new solution set for the Borel Bricard problem. 
Moreover we discuss the reality of self-motions  and give a sufficient condition for the 
design of linear pentapods of Type 1 and Type 2, which have a self-motion free workspace. 
} 

\keywords{
Linear Pentapod, Self-motion, Line-symmetric motion, Borel-Bricard problem}

\section{Introduction}
 
The geometry of a linear pentapod is given by the five base anchor points $\go M_i$ in the fixed system $\Sigma_0$ 
and by the five collinear platform anchor points $\go m_i$ in the moving system $\Sigma$ (for $i=1,\ldots ,5$). 
Each pair $(\go M_i,\go m_i)$ of corresponding anchor points is connected by a SPS-leg, where only the prismatic joint is active.

If the geometry of the manipulator is given as well as the lengths $R_i$ of the five pairwise distinct legs, a linear 
pentapod has generically mobility 1, which corresponds to the rotation about 
the carrier line $\go p$ of the five platform anchor points.
As this rotational motion is irrelevant for applications with axial symmetry (e.g.\ 
5-axis milling, spot-welding, laser or water-jet engraving/cutting, 
spray-based painting, etc.), these mechanisms are of great practical interest. 
Nevertheless configurations should be avoided where the manipulator gains an additional 
uncontrollable mobility,  which is referred as self-motion. 

\subsection{Review on self-motions of linear pentapods}\label{review}

The self-motions of linear pentapods represent interesting solutions to the 
still unsolved problem posed by the French Academy of Science for the {\it Prix Vaillant} of the year 1904, which is also known as 
Borel-Bricard  problem (cf.\ \cite{borel,bricard,husty_bb}) and reads as follows: 
{\it "Determine and study all displacements of a rigid body in which distinct points of the body move on spherical paths."}

For the special case of five collinear points the Borel-Bricard problem was studied by  
Darboux \cite[page 222]{koenigs}, Mannheim \cite[pages 180ff]{mannheim} and Duporcq \cite{duporcq} 
(see also Bricard \cite[Chapter III]{bricard}). 
A contemporary and accurate reexamination of these old results, 
which also takes the coincidence of platform anchor points into account, was done 
in \cite{linear_penta} yielding a  full classification of linear pentapods with self-motions.

Beside the architecturally singular linear pentapods \cite[Corollary 1]{linear_penta} and some trivial cases with pure 
rotational self-motions \cite[Designs $\alpha$, $\beta$, $\gamma$]{linear_penta}
or pure translational ones \cite[Theorem 1]{linear_penta} there only remain the following three designs:

Under a self-motion each point of the line $\go p$ has a spherical (or planar) trajectory. 
The locus of the corresponding sphere centers is a space curve $\go P$ of degree 3, where the 
mapping from $\go p$ to $\go P$ is named $\sigma$.
$\go P$  intersects the ideal plane in one 
real point $\go W$ and two conjugate complex ideal points, where the latter ones are the cyclic points $\go I$ and $\go J$ of a plane orthogonal to the 
direction of $\go W$. $\go P$ is therefore a so-called straight cubic circle. 
The following subcases can be distinguished:
\begin{enumerate}[$\bullet$]
\item
$\go P$ is irreducible: 
	\begin{enumerate}[$-$]
	\item
	$\sigma$ maps the ideal point $\go U$ of $\go p$ to $\go W$ (Type 5 according to \cite{linear_penta}).
	\item
	$\sigma$ maps $\go U$ to a finite point of $\go P$ (Type 1 according to \cite{linear_penta}; see Fig.\ \ref{fig-0}a).
	\end{enumerate}
\item
$\go P$ splits up into a circle $\go q$ and a line $\go s$, which is orthogonal to the carrier plane $\varepsilon$ of $\go q$ and 
intersects $\go q$ in a point $\go Q$. There is a bijection $\sigma$ between $\go p\setminus\left\{ \go S \right\}$ and 
$\go q\setminus\left\{ \go Q \right\}$; moreover the finite point $\go S$ is mapped to $\go s$. 
As a consequence $\sigma$ maps $\go U$ to a point on the circle different from $\go Q$ 
(Type 2 according to \cite{linear_penta}; see Fig.\ \ref{fig-0}b). 

\end{enumerate}

\begin{figure}[top]
\begin{center} 
\subfigure[]{ 
 \begin{overpic}
    [width=47mm]{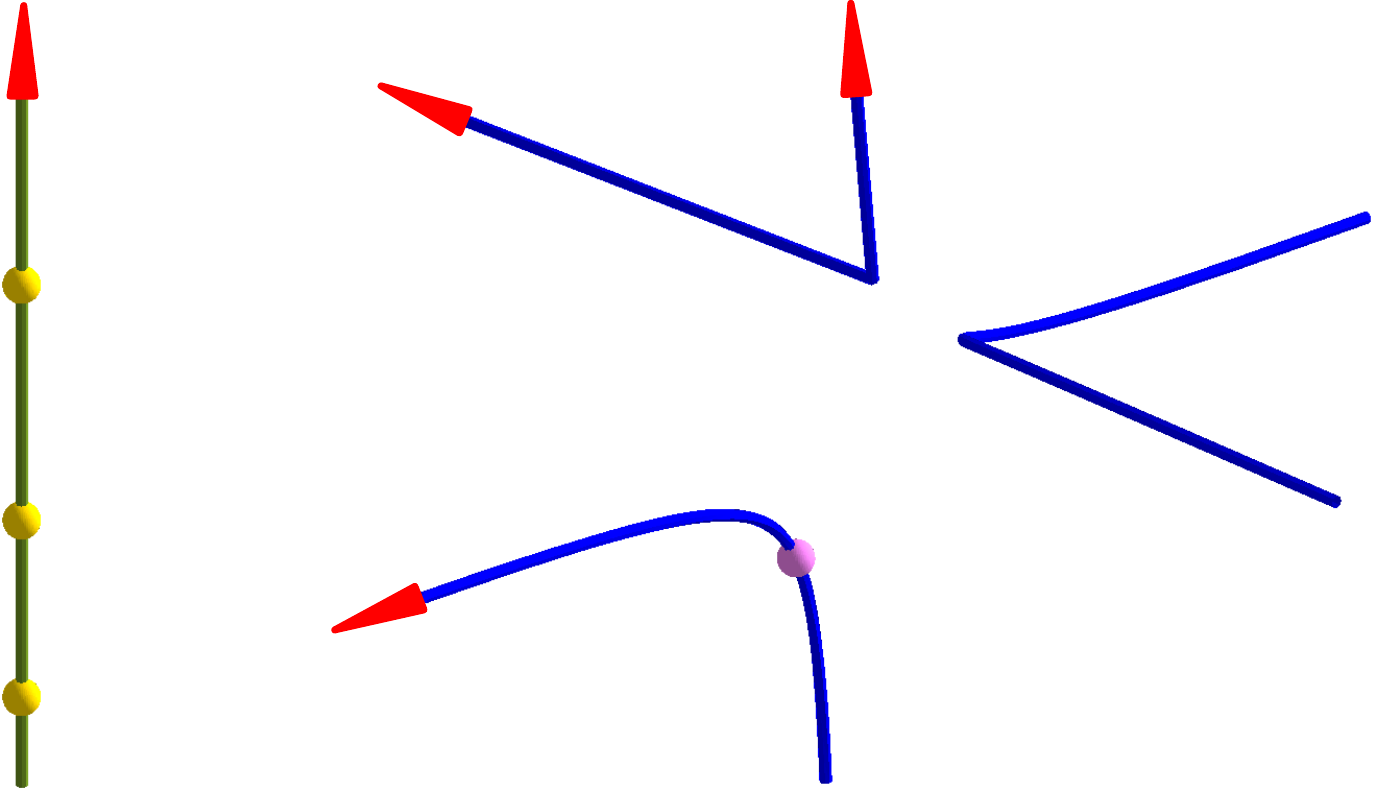}
\begin{small}
\put(3,0){$\sigma^{-1}(\go W)$}
\put(3,19){$\sigma^{-1}(\go J)$}
\put(3,30){$\sigma^{-1}(\go I)$}
\put(3,43){$\go p$}
\put(4,52){$\go U$}
\put(30,52){$\go I$}
\put(64.5,52){$\go J$}
\put(28,6){$\go W$}
\put(60.5,14){$\sigma(\go U)$}
\put(94,14.5){$\go P$}
\end{small}     
  \end{overpic} 
 }
\hfill
\subfigure[]{
\begin{overpic}
    [width=50mm]{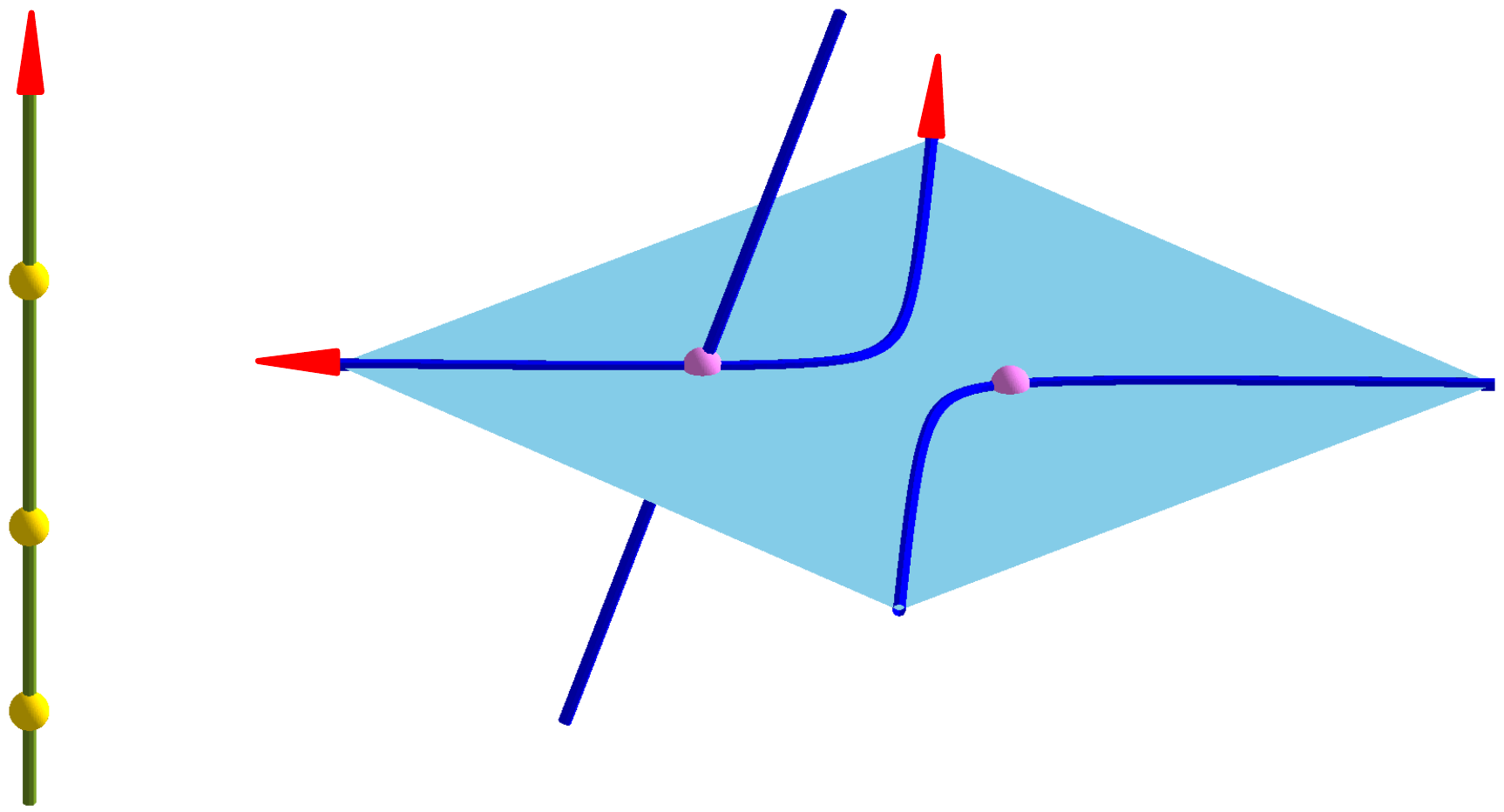}
\begin{small}
\put(3.5,0){$\sigma^{-1}(\go J)$}
\put(3.5,12){$\go S$}
\put(3.5,36){$\sigma^{-1}(\go I)$}
\put(3.5,26){$\go p$}
\put(4,48.5){$\go U$}
\put(32.5,6){$\go s$}
\put(65,22){$\sigma(\go U)$}
\put(17,23){$\go I$}
\put(64,46){$\go J$}
\put(44.5,23.3){$\go Q$}
\put(55.5,16.5){$\go q$}
\put(72,35){$\varepsilon$}
\end{small}         
  \end{overpic}
} 
\end{center} 
\caption{Projective sketch of linear pentapods of (a) Type 1 and (b) Type 2, respectively, with self-motions. 
}
  \label{fig-0}
\end{figure}

\subsection{Basics on line-symmetric motions}\label{blsm}

Krames (e.g.\ \cite{krames,krames2}) studied special one-parametric motions ({\it Symmetrische Schrotung} in German), which are obtained
by reflecting the moving system $\Sigma$ in the generators of a ruled surface 
of the fixed system $\Sigma_0$, which is the so called {\it basic surface}.  
These so-called {\it line-symmetric motions} were also studied by Bottema and Roth \cite[\S 7 of Chapter 9]{bottema}, 
who gave an intuitive algebraic characterization in terms of 
Study parameters $(e_0:e_1:e_2:e_3:f_0:f_1:f_2:f_3)$, which are shortly repeated next. 

All real points of the Study parameter space $P^7$ (7-dimensional projective space), 
which are located on the so-called Study quadric $\Psi:\,\sum_{i=0}^3e_if_i=0$, 
correspond to an Euclidean displacement with exception of the 3-dimensional subspace $e_0=e_1=e_2=e_3=0$, 
as its points cannot fulfill the condition $N\neq 0$ with $N:=e_0^2+e_1^2+e_2^2+e_3^2$. 
The translation vector $\Vkt s:=(s_1,s_2,s_3)^T$ and the rotation matrix $\Vkt R$ of the corresponding 
Euclidean displacement $\Vkt m_i\mapsto\Vkt R\Vkt m_i + \Vkt s$ are given for $N=1$ by:  
\begin{equation*}
\begin{split}
s_1&=-2(e_0f_1-e_1f_0+e_2f_3-e_3f_2), \quad
s_2=-2(e_0f_2-e_2f_0+e_3f_1-e_1f_3), \\
s_3&=-2(e_0f_3-e_3f_0+e_1f_2-e_2f_1), \\
\Vkt R &= 
\begin{pmatrix}  
r_{11} & r_{12} & r_{13} \\
r_{21} & r_{22} & r_{23} \\
r_{31} & r_{32} & r_{33} 
\end{pmatrix}=
\begin{pmatrix} 
e_0^2+e_1^2-e_2^2-e_3^2 & 2(e_1e_2-e_0e_3) & 2(e_1e_3+e_0e_2)  \\
2(e_1e_2+e_0e_3) & e_0^2-e_1^2+e_2^2-e_3^2 & 2(e_2e_3-e_0e_1)  \\
2(e_1e_3-e_0e_2) & 2(e_2e_3+e_0e_1) & e_0^2-e_1^2-e_2^2+e_3^2
\end{pmatrix}.
\end{split}
\end{equation*}  
  
There always exists a moving frame (in dependence of a given fixed frame) in a way  
that $e_0=f_0=0$ holds for a line-symmetric motion. 
Then $(e_1:e_2:e_3:f_1:f_2:f_3)$ are the Pl\"ucker coordinates (according to the convention used in \cite{bottema}) 
of the generators of the basic surface with respect to the fixed frame .

\subsection{Line-symmetric self-motions of linear pentapods}\label{zurueck}

It is well known (cf.\ \cite[\S 15]{duporcq}, \cite[\S 12]{bricard}) that the self-motions of Type 5 are 
obtained by restricting the Borel-Bricard motions\footnote{These are the only non-trivial motions
where all points of the moving space have spherical trajectories (cf.\ \cite[Chapter VI]{bricard}).} (also known as BB-I motions) 
to a line. Note that this special case was also discussed in detail by Krames \cite[Section 5]{krames}, who 
also pointed out the line-symmetry of BB-I motions. 

Beside these BB-I motions, 
there also exist line-symmetric motions (so-called BB-II motions),  
where all points of a hyperboloid carrying two reguli of lines have spherical trajectories. 
It is known (cf.\ \cite[page 24]{hartmann} and \cite[page 188]{krames2}) that the corresponding sphere centers of lines, belonging to one 
regulus\footnote{The corresponding sphere centers of lines belonging to the other regulus are again 
located on a line (cf.\ \cite[page 24]{hartmann}), which imply  architecturally singular designs of linear pentapods.},
form irreducible straight cubic circles, which imply examples of self-motions of Type 1. 
Note that there also exist degenerated cases where the hyperboloid splits up into two orthogonal planes, 
which contain examples of self-motions of Type 2.

A simple count of free parameters shows that not all self-motions of Type 1 (5-parametric set\footnote{With respect to the notation introduced 
in Section 2 these five parameters are 
$C,a_r,a_c,a_4$ and $p_5$ or $R_1$ (cf.\ Eq.\ (\ref{zwei}))  by canceling the factor of similarity by setting $A=1$.} of motions where all
points of a line have spherical paths) can be generated by BBM-II motions (which produce only a 4-parametric set\footnote{These are the parameters 
$a,c,g,k$ used in \cite[Section 2.3]{hartmann}.}). 
The same argumentation holds for Type 2 self-motions and the mentioned degenerated case.

As a consequence the question arise whether all self-motions of linear pentapods of Type 1 and Type 2 can be generated by 
line-symmetric motions. If this is the case we can apply a construction proposed by Krames \cite[page 416]{krames}, which is 
discussed in Section \ref{new:solutions}, yielding new solutions to the Borel-Bricard problem.

\section{One the line-symmetry of Type 1 and Type 2 self-motions}

For our calculations we do not select arbitrary pairs $(\go m_i,\go M_i)$ of $\go p$ and $\go P$, which are in correspondence with 
respect to $\sigma$ ($\Leftrightarrow$ $\sigma(\go m_i)=\go M_i$), but choose the following special ones:

$\go M_4$ equals $\go W$, $\go M_2$ coincides with $\go I$ and $\go M_3$ with $\go J$. The corresponding platform anchor points are denoted by 
$\go m_4$, $\go m_2$ and $\go m_3$, respectively. As $\go M_i$ are ideal points the corresponding points $\go m_i$ are not running on spheres 
but in planes orthogonal to the direction of $\go M_i$. 
Therefore these three point pairs imply three so-called Darboux conditions $\Omega_i$ for $i=2,3,4$. 
Moreover we denote $\go U$ as $\go m_5$ and its corresponding finite point under $\sigma$ by $\go M_5$. This point pair describes 
a so-called Mannheim condition $\Pi_5$ (which is the inverse of a Darboux condition). 
The pentapod is completed by a sphere condition $\Lambda_1$ of any pair of corresponding finite points $\go m_1$ and 
$\go M_1$. 

In \cite{linear_penta} we have chosen the fixed frame $\mathcal{F}_0$ in a way that 
$\go M_1$ equals its origin and $\go M_4$ coincides with the ideal point of the $z$-axis. 
Moreover we located the moving frame $\mathcal{F}$ in a way that $\go p$ coincides with the $x$-axis, where $\go m_1$ 
equals its origin.

For the study at hand it is advantageous to select a different set of fixed and moving frames 
$\mathcal{F}_0^{\prime}$ and $\mathcal{F}^{\prime}$, respectively:
\begin{enumerate}[$\bullet$]
\item
As $\go M_2$ and $\go M_3$ coincides with the cyclic points, 
we can assume without loss of generality (w.l.o.g.) that $\go M_5$ is located in the $xz$-plane (as a rotation about the z-axis does not 
change the coordinates of $\go M_1,\ldots ,\go M_4$). 
Moreover we want to apply a translation in a way that $\go M_5$ is in the origin of the new fixed frame $\mathcal{F}_0^{\prime}$. 
Summed up the coordinates  with respect to $\mathcal{F}_0^{\prime}$ read as:
\begin{equation}
 \Vkt M_5=(0,0,0),\quad \Vkt M_1=(A,0,C) \quad \text{with} \quad A\neq 0 
\end{equation}
as $A=0$ implies a contradiction to the properties of $\go P$ for Type 1 and Type 2 pentapods given in Section \ref{review}. 
Moreover, $\go M_2$, $\go M_3$ and $\go M_4$ are the ideal points in direction $(1,i,0)^T$, $(1,-i,0)^T$ and $(0,0,1)^T$, respectively. 
\item
With respect to $\mathcal{F}_0^{\prime}$ the location of $\go p$ is undefined, but the 
coordinates $\Vkt m_i$ of $\go m_i$ can be parametrized as follows for $i=1,\ldots, 4$: 
\begin{equation}
\Vkt m_i= \Vkt n+ (a_i-a_r)\Vkt d  \quad\text{with}\quad a_1=0,\,\, a_2=a_r+ia_c,\,\,  a_3=a_r-ia_c
\end{equation}
where $a_r,a_c\in\RR$ and $a_c\neq 0$ holds. 
$\go m_5$ is the ideal point in direction of the unit-vector $\Vkt d=(d_1,d_2,d_3)^T$, which obtains the 
rational homogeneous parametrization of the unit-sphere, i.e.\:
\begin{equation}
d_1=\tfrac{2h_0h_1}{h_0^2+h_1^2+h_2^2}, \quad
d_2=\tfrac{2h_0h_2}{h_0^2+h_1^2+h_2^2},  \quad 
d_3=\tfrac{h_1^2+h_2^2-h_0^2}{h_0^2+h_1^2+h_2^2}.
\end{equation}
\end{enumerate}
\noindent
Now we are looking for the point $\Vkt n=(n_1,n_2,n_3)^T$ and the direction $(h_0:h_1:h_2)$ in a way that 
for the self-motion of the pentapod $e_0=f_0=0$ holds. 
We can discuss Type 1 and Type 2 at the same time, just having in mind that $a_4\neq 0\neq C$  has to hold for Type 1 
and $a_4=0=C$ for Type 2 (according to \cite{linear_penta}).

By setting $\Vkt r_i:=(r_{i1},r_{i2},r_{i3})^T$ for $i=1,2,3$ the Darboux and Mannheim constraints with respect to $\mathcal{F}_0^{\prime}$ and $\mathcal{F}^{\prime}$
can be written as: 
\begin{align}\label{oben}
\Omega_2:\,&(s_1+\Vkt r_1\Vkt m_2)-i(s_2+\Vkt r_2\Vkt m_2)-p_2N=0, &\quad \Omega_4:\,&(s_3+\Vkt r_3\Vkt m_4)-p_4N=0, \\ \label{unten}
\Omega_3:\,&(s_1+\Vkt r_1\Vkt m_3)+i(s_2+\Vkt r_2\Vkt m_3)-p_3N=0, &\quad \Pi_5:\,&(\Vkt R\Vkt d)(\Vkt s+\Vkt R\Vkt p_5)N^{-1}=0, 
\end{align}
with $\Vkt p_5=\Vkt n+(p_5-a_r)\Vkt d$, which is the coordinate vector of the intersection point of the Mannheim plane and $\go p$ with respect to $\mathcal{F}^{\prime}$. 
Moreover $(p_j,0,0)^T$ for $j=2,3$ (resp.\ $(0,0,p_4)^T$) are the coordinates of the intersection point of the Darboux plane and the $x$-axis (resp.\ $z$-axis) of $\mathcal{F}_0^{\prime}$. 

\begin{remark}
As from the Mannheim constraint $\Pi_5$ of Eq.\ (\ref{unten}) 
the factor $N$ cancels out, all four constraints $\Omega_2,\Omega_3,\Omega_4,\Pi_5$ are homogeneous quadratic in the Study parameters and 
especially linear in $f_0,\ldots ,f_3$. \hfill $\diamond$
\end{remark}

According to \cite[Theorems 13 and 14]{linear_penta} the leg-parameters $p_2,\ldots ,p_5,R_1$ 
have to fulfill the following necessary and sufficient conditions for the self-mobility (over $\CC$) of 
a linear pentapod of Type 1 and Type 2, respectively:
\begin{align}\label{eins}
&p_2=\tfrac{Aa_3v}{(a_3-a_4)^2},\qquad 
p_3=\tfrac{Aa_2v}{(a_2-a_4)^2},\qquad 
p_4=-\tfrac{Ca_4v}{(a_2-a_4)(a_3-a_4)}, \\ \label{zwei} 
&\phm(a_2-a_4)^2(a_3-a_4)^2\left[2wp_5-vR_1^2-(2w-va_4)a_4 \right] + vw^2(A^2+C^2)=0,
\end{align}
with $v:=a_2+a_3-2a_4$ and $w:=a_2a_3-a_4^2$. 
Therefore if we set $p_2,p_3,p_4$ as given in Eq.\ (\ref{eins}) then only one condition 
in $p_5$ and $R_1$ remains in  Eq.\ (\ref{zwei}). Therefore these pentapods have a 1-dimensional
set of self-motions.

\begin{theorem}\label{main}
Each self-motion of a linear pentapod of Type 1 and Type 2 can be generated by a 1-dimensional set of 
line-symmetric motions. For the special case $p_5=a_4=a_r$ this set is even 2-dimensional.
\end{theorem}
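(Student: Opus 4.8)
The plan is to impose the line-symmetry condition $e_0=f_0=0$ (which, as recalled in Section~\ref{blsm}, characterises a line-symmetric motion in a suitable moving frame) from the very beginning, and then to determine for which frames $\mathcal{F}'$ -- equivalently for which values of $\Vkt n$ and $(h_0:h_1:h_2)$ -- the remaining conditions still define a one-parametric motion. Each such admissible frame is a line-symmetric generation of the given self-motion, so the dimension of the set of admissible frames is exactly the dimension of the set of line-symmetric motions asserted in the statement.

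First I would set $e_0=f_0=0$ in the four constraints $\Omega_2,\Omega_3,\Omega_4,\Pi_5$ of Eqs.~(\ref{oben})--(\ref{unten}). By the Remark each of them is linear in $f_0,\dots,f_3$, so after $f_0=0$ they become inhomogeneous linear equations in $f_1,f_2,f_3$ whose coefficients are homogeneous linear in $e_1,e_2,e_3$ (coming from $\Vkt s$) and whose $f$-free terms are homogeneous quadratic in $e_1,e_2,e_3$ (coming from $\Vkt R$, from $N=e_1^2+e_2^2+e_3^2$, and from the prescribed $p_2,p_3,p_4$ of Eq.~(\ref{eins})); the geometry data $\Vkt n$ and $(h_0:h_1:h_2)$ enter polynomially. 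I would single out $\Omega_2,\Omega_3,\Omega_4$, whose coefficient matrix $M(e_1,e_2,e_3)$ I expect to be generically regular, and solve them for $(f_1,f_2,f_3)$ as rational functions of $e_1,e_2,e_3$ with denominator $\det M$.

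Next I would substitute this solution into the two scalar conditions that are left, namely the fourth constraint $\Pi_5$ and the Study quadric $\Psi$, which for $e_0=f_0=0$ reads $e_1f_1+e_2f_2+e_3f_3=0$. Clearing denominators yields two homogeneous curves $G_1(e_1,e_2,e_3)=0$ and $G_2(e_1,e_2,e_3)=0$ in the projective plane with coordinates $(e_1:e_2:e_3)$. A generic such pair meets in finitely many points, corresponding to isolated poses only; a genuine one-parametric motion therefore arises exactly when $G_1$ and $G_2$ share a common one-dimensional component. The heart of the proof is to rewrite this common-component requirement as explicit polynomial conditions on $\Vkt n$ and $(h_0:h_1:h_2)$, obtained by a resultant and greatest-common-divisor analysis of $G_1$ and $G_2$; throughout I would treat Type~1 ($a_4\neq0\neq C$) and Type~2 ($a_4=0=C$) uniformly, as indicated after Eq.~(\ref{unten}).

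The step I expect to be the main obstacle is precisely this elimination: one must show that the common-component condition is not over-determined but leaves a one-parameter family of admissible $(\Vkt n,(h_0:h_1:h_2))$ for generic leg data, while in the degenerate situation $p_5=a_4=a_r$ -- where $\go m_4$ and the Mannheim point both collapse onto $\Vkt n$ -- I expect $M$ or one of $G_1,G_2$ to drop rank, so that one condition becomes vacuous and the admissible family gains a dimension. To finish, I would verify that each admissible frame really reproduces the prescribed self-motion: the three Darboux planes and the Mannheim plane are enforced by $\Omega_2,\Omega_3,\Omega_4,\Pi_5$, while the sphere condition $\Lambda_1$ fixes $R_1$ (resp.\ $p_5$) through the parameter relation~(\ref{zwei}) and is automatically satisfied on the common component; hence every pose of the constructed line-symmetric motion meets all five leg constraints and coincides with the self-motion. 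Counting the dimension of the admissible frame family then gives the claimed $1$- and $2$-dimensional sets.
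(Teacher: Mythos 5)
Your setup coincides with the paper's: the same frames $\mathcal{F}_0^{\prime},\mathcal{F}^{\prime}$, the same constraints $\Omega_2,\Omega_3,\Omega_4,\Pi_5$, the normalization $e_0=0$, elimination of the $f_i$, and the idea that the admissible data $(\Vkt n,(h_0:h_1:h_2))$ parametrize the set of line-symmetric generations. The organizational difference (you impose $f_0=0$ ab initio and ask when the two residual plane curves $G_1=0$, $G_2=0$ share a component, whereas the paper solves $\Psi,\Omega_2,\Omega_3,\Omega_4$ for all of $f_0,\ldots,f_3$ and asks when the line-symmetry condition $F=0$, the numerator of $f_0$, is contained in the self-motion curve $G=0$) is only a reformulation. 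The genuine gap is that the step you yourself flag as ``the main obstacle'' -- turning the common-component requirement into explicit, non-overdetermined conditions -- is exactly where the proof lives, and you give no mechanism for it. The paper's mechanism is a degree count plus a geometric exclusion: $G$ is a homogeneous quartic and $F$ a homogeneous cubic in $e_1,e_2,e_3$, so $F\mid G$ forces $G=LF$ with $L$ \emph{linear}; the component $L=0$ cannot be a self-motion because that would force a Sch\"onflies motion whose axis direction coincides with $\go W=\sigma(\go U)$, which happens only for Type~5; hence $L=0$ is the antipodal-direction ambiguity and $L=d_1e_1+d_2e_2+d_3e_3$ is known in advance. Only then does the ansatz $\lambda LF-G=0$ become a finite system of coefficient equations, which has the unique solution (\ref{ns}) for $\Vkt n$ and leaves a single quartic condition $H(h_0,h_1,h_2)=0$ -- this curve is the claimed $1$-dimensional set. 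Without the identification of $L$, a generic ``resultant and gcd analysis'' of two curves in unknown position does not obviously close, and the assertion that a one-parameter family of admissible frames survives remains unproven.

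Two smaller points. First, your description of the special case misidentifies the mechanism: the $2$-dimensional set arises for $v=0$, where Eq.~(\ref{zwei}) forces $p_5=a_4=a_r$, the quartic $G$ vanishes identically (so the self-motion is carried by $\Lambda_1=0$ alone), and $F\equiv 0$ holds for the $\Vkt n$ of Eq.~(\ref{ns}) independently of the direction $(h_0:h_1:h_2)$; it is not a rank drop of your matrix $M$ caused by anchor points collapsing onto $\Vkt n$. Second, your closing claim that $\Lambda_1$ ``is automatically satisfied on the common component'' needs the fact, used in the paper, that $G$ \emph{is} $\Lambda_1$ once Eq.~(\ref{zwei}) is solved for $R_1$; as stated it is circular.
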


\begin{proof}
W.l.o.g.\ we can set $e_0=0$ as any two directions $\Vkt d$ of $\go p$ can be transformed into each other 
by a half-turn about their enclosed bisecting line. Note that this line is not uniquely determined if and only if 
the two directions are antipodal.

W.l.o.g.\ we can solve $\Psi,\Omega_2,\Omega_3,\Omega_4$ for $f_0,f_1,f_2,f_3$ and plug the obtained 
expressions into $\Pi_5$, which yields in the numerator a homogeneous quartic polynomial $G[1563]$ in $e_1,e_2,e_3$, 
where the number in the brackets gives the number of terms. Moreover the numerator of the obtained expression for $f_0$
is denoted by $F[600]$, which is a homogeneous cubic polynomial in \medskip $e_1,e_2,e_3$. 

\noindent
{\bf General Case $(v\neq 0)$:} The condition $G=0$ already expresses the self-motion as $G$ equals $\Lambda_1$ if 
we solve Eq.\ (\ref{zwei}) for $R_1$. Moreover $F=0$ has to hold if the self-motion of the line $\go p$ can be generated by a line-symmetric motion. 
As for any solution $(e_1:e_2:e_3)$ of $F=0$ also $G=0$ has to hold, $G$ has to split into $F$ and a homogeneous 
linear factor $L$ in $e_1,e_2,e_3$. 

Now $L=0$ cannot correspond to a self-motion of the linear pentapod, 
but has to arise from the ambiguity in representing a direction of $\go p$ 
mentioned at the beginning of the proof. This can be argued indirectly as follows: 

Assumed $L=0$ implies a self-motion, then it has to be a Sch\"onflies motion (with a certain direction $\go v$ of the rotation axis) 
due to $e_0=0$.  
As under such a motion the angle enclosed by  $\go v$ and $\go p$ remains constant\footnote{This angle condition 
can be seen as the limit of the sphere condition (cf.\ \cite[Section 4.1]{duporcq_n}).}
the ideal point $\go U$ of $\go p$ has to be mapped by $\sigma$ to the ideal point $\go V$ of $\go v$. 
This implies that $\go V$ has to coincide with $\go W$, which can only be the case for pentapods of Type 5; a 
contradiction. 

Therefore there has to exist a pose of $\go p$ during the self-motion, 
where it is oppositely oriented with respect to the fixed frame and moving frame, respectively. 
As a consequence we can set $L=d_1e_1+d_2e_2+d_3e_3$ which yields the ansatz:
\begin{equation}
\Delta: \quad \lambda LF-G=0.
\end{equation}

The resulting set of four equations arising from the coefficients of 
$e_1^3e_2$, $e_1^3e_3$, $e_1e_3^3$ and $e_2e_3^3$ of $\Delta$ has the unique solution: 
\begin{equation}\label{ns}
n_1=a_cd_2,\quad
n_2=-a_cd_1, \quad
n_3=(a_r-a_4)d_3, \quad
\lambda=2(h_0^2+h_1^2+h_2^2).
\end{equation}
Now $\Delta$ splits up into $(e_1^2+e_2^2+e_3^2)^2(h_0^2+h_1^2+h_2^2)H[177]$, where $H$ is homogeneous of degree 4 in $h_0,h_1,h_2$. 
For the explicit expression of the planar quartic curve $H=0$ see Remark \ref{remH}, which is given right after this proof. 

\begin{remark}\label{rem:par}
Note that all self-motions of the general case can be parametrized as the resultant of 
$G$ and the normalizing condition $N-1$ with respect to 
$e_i$ yields a polynomial, which is only quadratic in $e_j$ for  pairwise distinct $i, j\in\left\{1,2\right\}$.
\hfill $\diamond$
\end{remark}

\noindent
{\bf Special Case $(v= 0)$:}
If $v=0$ holds, we cannot solve Eq.\ (\ref{zwei}) for  $R_1$. 
The conditions $v=0$ and  Eq.\ (\ref{zwei}) imply  $p_5=a_4=a_r$. 
Now $G$ is fulfilled identically and the self-motion is given by $\Lambda_1=0$, which is of degree 4 in $e_1,e_2,e_3$.
Moreover for this special case $F=0$ already holds for $\Vkt n$ given 
in Eq.\ (\ref{ns}). Therefore any direction $(h_0:h_1:h_2)$ for $\go p$ can be chosen 
in order to fix the line-symmetric motion. \hfill $\BewEnde$
\end{proof}

\begin{figure}[top] 
 \begin{minipage}{73mm} 
\begin{center}
 \contourlength{0.5mm} 
 \begin{overpic}
    [width=70mm]{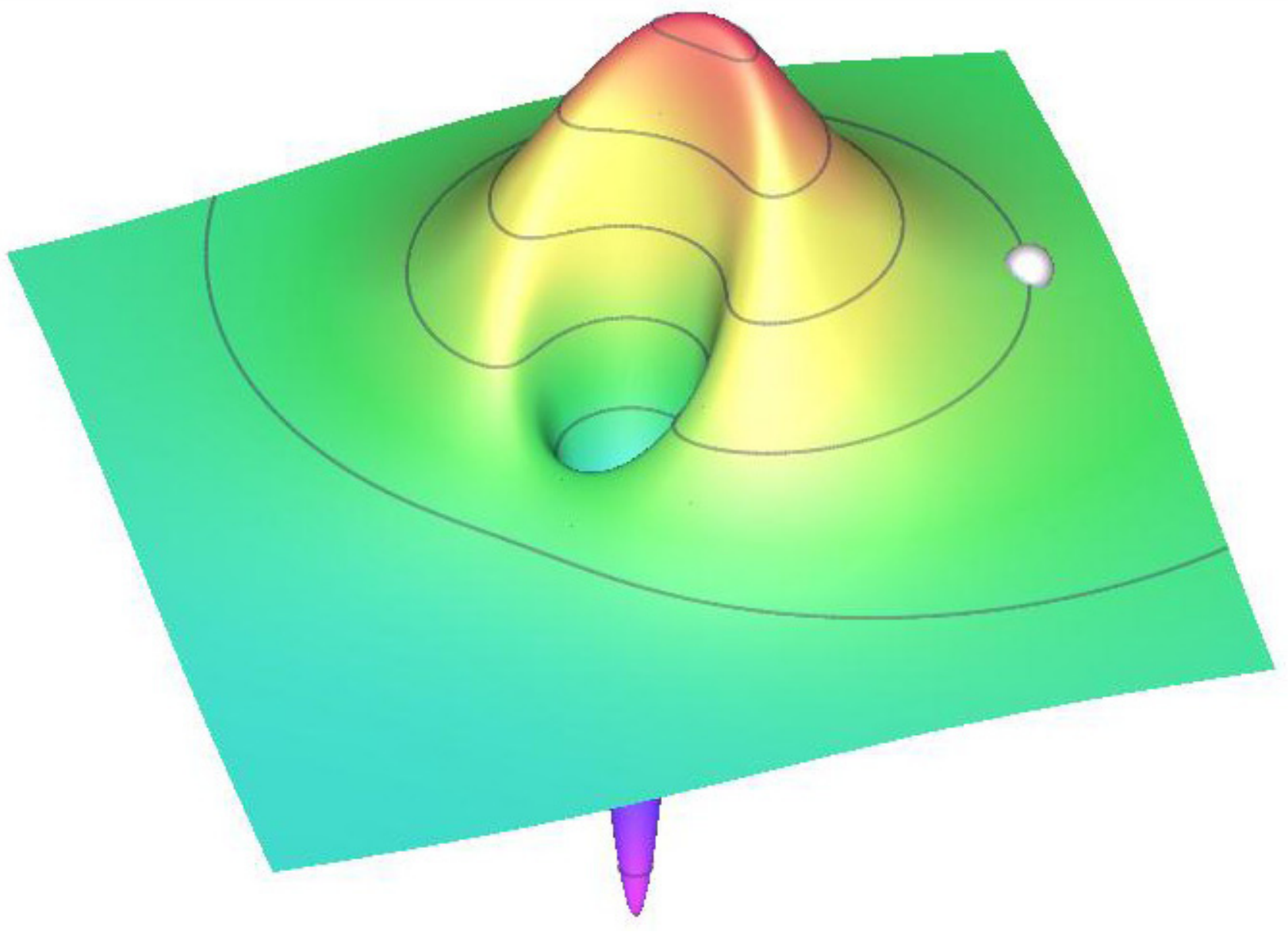}
  \end{overpic} 
\end{center}
 \end{minipage}    
\hfill
 \begin{minipage}{40mm}
\begin{center}
 \contourlength{0.3mm}
 \begin{overpic}
    [width=37mm]{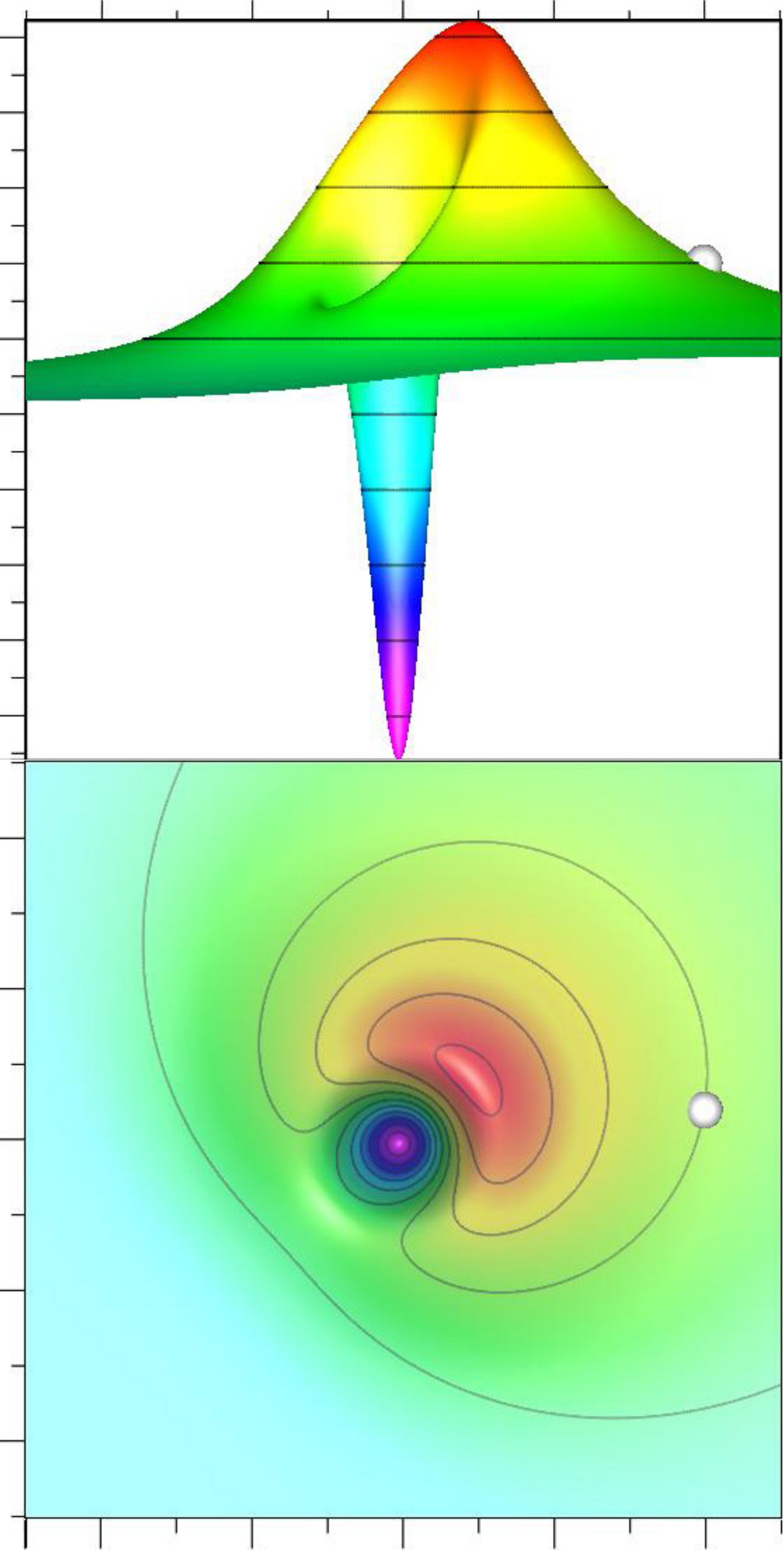}
\begin{small}
\put(25,-3){$0$}
\put(35,-3){$1$}
\put(44.75,-3){$2$}
\put(12.5,-3){$-1$}
\put(2.75,-3){$-2$}
\put(25,-3){$0$}
\put(25,3.5){$h_2$}
\put(3,25){$h_1$}
\put(-2.5,25){$0$}
\put(-2.5,15.25){$2$}
\put(-2.5,5.5){$4$}
\put(-5,35){$-2$}
\put(-5,45){$-4$}
\put(-2.5,52.35){$0$}
\put(-2.5,62.3){$2$}
\put(-2.5,72.25){$4$}
\put(-2.5,81.7){$6$}
\put(-2.5,91.3){$8$}
\put(3,81.7){$p_5$}
\end{small}         
  \end{overpic}
\end{center}
\vspace{+0mm} 
\end{minipage}
\caption{
For the example given in Eq.\ (\ref{par:ex}) the graph of $p_5$ in dependency of $h_1$ and $h_2$ with $h_0=1$ is displayed in 
the axonometric view on the left and in the front resp.\ top view on the right side. 
The highlighted point at height $6$ corresponds to the values $h_1=-\tfrac{489262}{226525}+\tfrac{488}{226525}\sqrt{675091}$ and 
$h_2=\tfrac{535336}{226525}+\tfrac{446}{226525}\sqrt{675091}$. 
}
\label{fig-3}
\end{figure}

\begin{remark}\label{remH}
$H=0$ represents a planar quartic curve, which can be verified to be entirely circular. 
Moreover $H=0$ can be solved  linearly for $p_5$. The corresponding graph is illustrated in Fig.\ \ref{fig-3}. 
  
If we reparametrize the $h_0h_1h_2$-plane in terms of homogenized polar coordinates by:
\begin{equation}
h_0=(\tau_1^2+\tau_0^2)\rho_0,\qquad
h_1=(\tau_1^2-\tau_0^2)\rho_1,\qquad
h_2=2\tau_0\tau_1\rho_1,
\end{equation}
where $(\tau_0,\tau_1)\neq (0,0)\neq (\rho_0,\rho_1)$ and $\tau_0,\tau_1,\rho_0,\rho_1\in \RR$ hold, then  
$H$ factors into $(\tau_0^2+\tau_1^2)^3(H_2\tau_1^2 + H_1\tau_0\tau_1 +H_0\tau_0^2)$ with 
\begin{equation}
\begin{split}
H_1&=8\rho_0\rho_1A(a_4-a_r)(\rho_1^2+\rho_0^2)(a_r^2-a_4^2+a_c^2)a_c, \\
H_0-H_2&= 8\rho_0\rho_1A(a_4-a_r)(\rho_1^2+\rho_0^2)[a_r(a_r-a_4)^2+a_c^2(a_r-2a_4)], \\
H_0+H_2&= 2\left[(a_r-a_4)^2+a_c^2\right]
[2a_4(\rho_1^4-\rho_0^4)(a_4-a_r)C +  \\
&\left((a_r-a_4)^2+a_c^2\right)\left((\rho_0^4+\rho_1^4)(a_4-p_5)+2\rho_0^2\rho_1^2(2a_r-a_4-p_5)  \right)].
\end{split}
\end{equation}
Therefore this equation can be solved quadratically for the homogeneous parameter $\tau_0:\tau_1$. 
Note that the value $p_5$ is fixed during a self-motion.
\hfill $\diamond$
\end{remark}

\section{On the reality of Type 1 and Type 2 self-motions}

A similar computation to \cite[Example 1]{linear_penta} shows that for any real point $\go p_t\in \go p$  
with $t\in \RR$ and coordinate vector $\Vkt p_t=\Vkt n+ (t-a_r)\Vkt d$  with respect to $\mathcal{F}^{\prime}$  
the corresponding real point $\go P_t\in\go P$ has the following coordinate vector $\Vkt P_t$ with respect to $\mathcal{F}_0^{\prime}$:
\begin{equation}\label{Pt}
\Vkt P_t=\left(
\tfrac{A(a_r^2+a_c^2-ta_r)}{(t-a_r)^2+a_c^2},
-\tfrac{Aa_ct}{(t-a_r)^2+a_c^2},
\tfrac{Ca_4}{a_4-t}\right)^T.
\end{equation}
As $L=0$ corresponds with one configuration of the self-motion we can compute the locus $\mathcal{E}_t$
of $\go p_t$  with respect to $\mathcal{F}_0^{\prime}$ under the 1-parametric 
set of self-motions by the variation of $(h_0:h_1:h_2)$ within $L=0$. 
Moreover due to the mentioned ambiguity we can select an arbitrary solution $(e_0:e_1:e_2)$ for $L=0$ 
fulfilling the normalization condition 
$N=1$; e.g.:
$e_1={h_2}{(h_1^2+h_2^2)^{-\tfrac{1}{2}}}$, 
$e_2=-{h_1}{(h_1^2+h_2^2)^{-\tfrac{1}{2}}}$ and 
$e_3=0$. 
Now the computation of $\Vkt R\Vkt p_t+\Vkt s$ yields a rational quadratic parametrization of $\mathcal{E}_t$ 
(see Fig.\ \ref{fig-1}a) in dependency of $(h_0:h_1:h_2)$. 

Note that this approach also includes the special case $(v= 0)$ as there always exists a value for $R_1^2$ 
(in dependency of $(h_0:h_1:h_2)$) in a way that $\Lambda_1=0$ holds.

\begin{figure}[top]
\begin{center} 
\subfigure[]{ 
 \begin{overpic}
    [width=55mm]{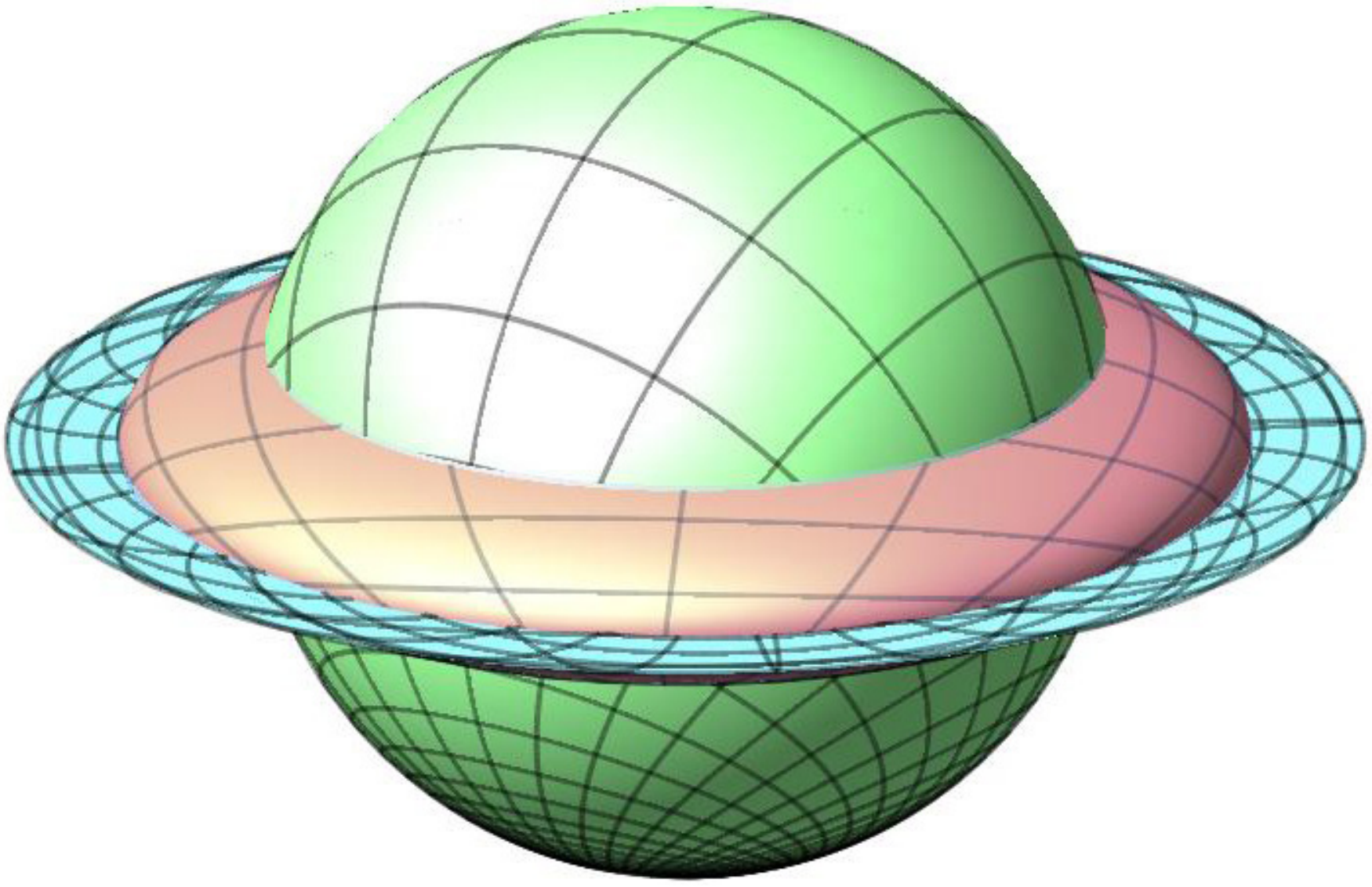}
\begin{small}
\put(72,27.5){$\mathcal{E}_{t}$}
\put(42,43){$\mathcal{E}_{c}$}
\put(0,20){$\mathcal{E}_{a_4}$}
\end{small}     
  \end{overpic} 
 }
\hfill
\subfigure[]{
\begin{overpic}
    [width=55mm]{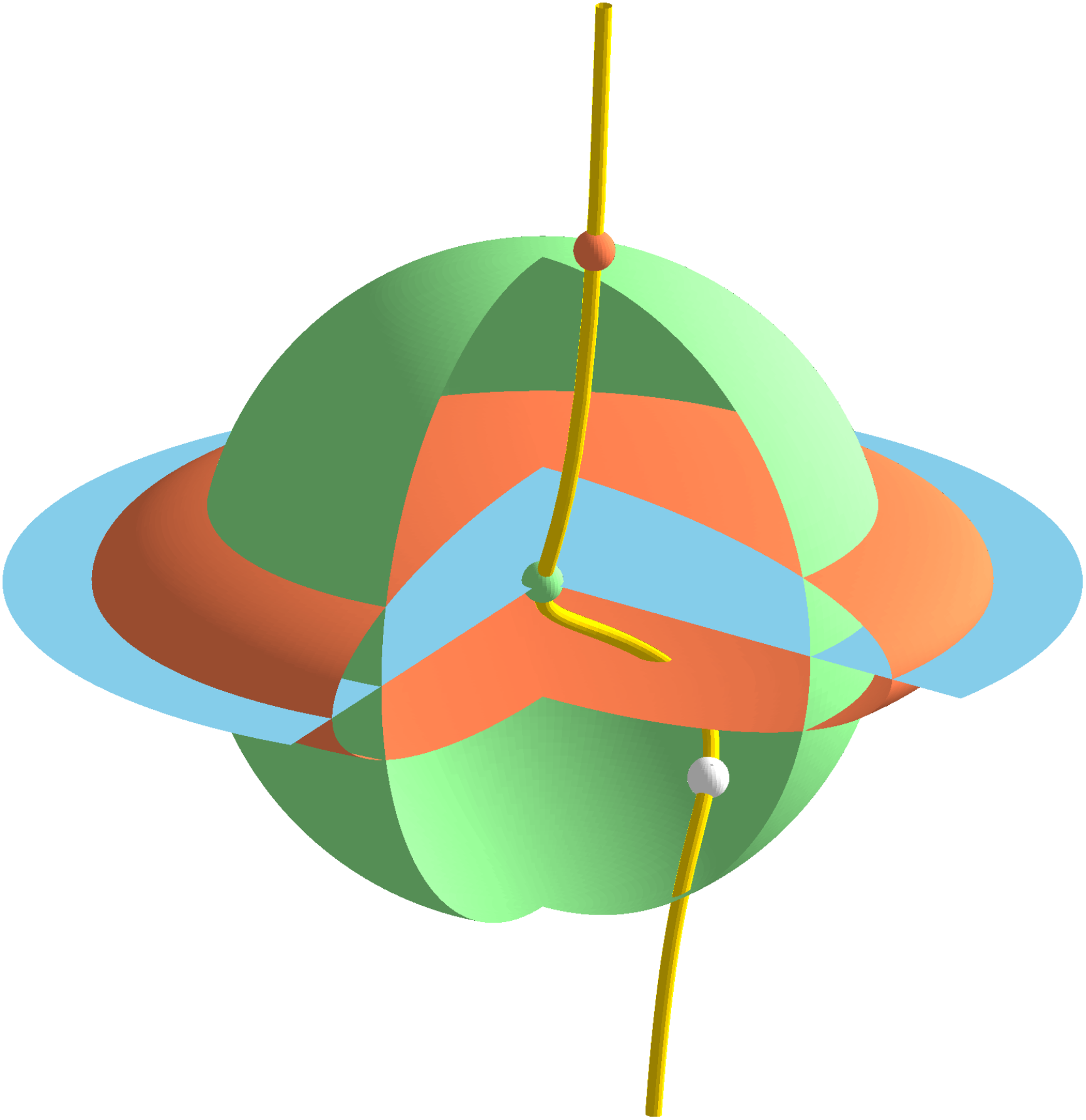}
\begin{small}
\put(80,47){$\mathcal{E}_{t}$}
\put(70,58){$\mathcal{E}_{c}$}
\put(89.3,48.5){$\mathcal{E}_{a_4}$}
\put(56,79.5){$\go P_t$}
\put(51.5,48){$\go P_c$}
\put(54.5,28){$\go P_{\infty}$}
\put(61,0.5){$\go P$}
\end{small}     
  \end{overpic}
}
\end{center} 
\caption{
For the example given in Eq.\ (\ref{par:ex}) the loci $\mathcal{E}_{a_4}$, $\mathcal{E}_c$ and $\mathcal{E}_{t}$ with 
$t=\tfrac{69}{20}$ are illustrated. 
(a) For $h_0=1$ the $h_1$- and $h_2$-parameter lines of the given rational quadratic parametrization are displayed.  
(b) The loci are sliced (along the not drawn axis of rotation $\go c$) in order to visualize their positioning with respect to the cubic $\go P$  
on which the points $\go P_{\infty}=\sigma(\go U)$, $\go P_c=\go C$ and $\go P_t$ are highlighted. 
Note that $\go P_{a_4}=\go W$ is the real ideal point of $\go P$.
}
  \label{fig-1}
\end{figure}

For $t\neq a_4$ all $\mathcal{E}_t$ are ellipsoids of rotation, which have the same  center point $\go C$ and axis of rotation $\go c$ 
(see Fig.\ \ref{fig-1}b). 
In detail, $\go C$ is the point of the straight cubic circle (\ref{Pt}) for the value $t=c$ with $c:=\tfrac{a_4^2-a_c^2-a_r^2}{2(a_4-a_r)}$ 
(for $a_4=a_r$ we get $c=\infty$ thus $\go p_{\infty}=\go U=\go m_5$ holds, which implies $\go C=\go M_5$) and 
$\go c$ is parallel to the $z$-axis of $\mathcal{F}_0^{\prime}$. 
Moreover the vertices on $\go c$ have distance $|a_4-t|$ from $\go C$ and the  squared radius of the equator circle 
 equals $(a_r-t)^2+a_c^2$.
Note that for $a_4\neq a_r$  the only sphere within the described set of ellipsoids is $\mathcal{E}_c$. 
For $a_4= a_r$ no such sphere exists.  

$\mathcal{E}_{a_4}$ is a circular disc in the Darboux plane $z=p_4$ (w.r.t.\ $\mathcal{F}_0^{\prime}$)
centered in $\go C$.

\begin{remark}
The existence of these ellipsoids was already known to Duporcq \cite[\S 9]{duporcq}, who 
used them to show that the spherical trajectories are algebraic curves of degree 4 (see Fig.\ \ref{fig-2}). \hfill $\diamond$
\end{remark}  

\begin{figure}[top]
\begin{center} 
\subfigure[]{ 
 \begin{overpic}
    [width=57mm]{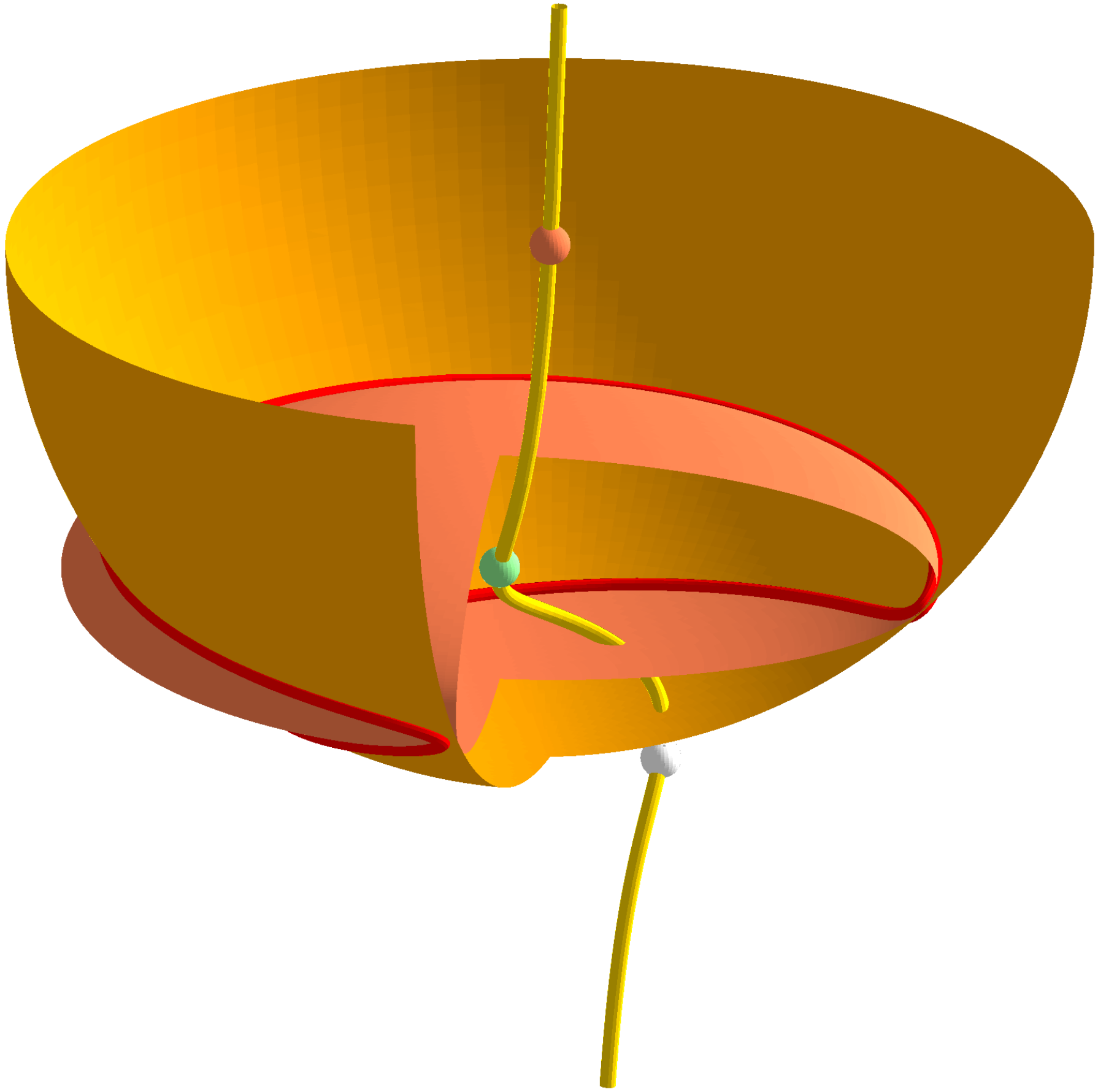}
\begin{small}
\put(64,40){$\mathcal{E}_{t}$}
\put(17,73){$\Phi_t$}
\put(43,74){$\go P_t$}
\put(48.5,48){$\go P_c$}
\put(57.5,0.5){$\go P$}
\end{small}         
  \end{overpic} 
}
\hfill
\subfigure[]{ 
\begin{overpic}
    [width=55mm]{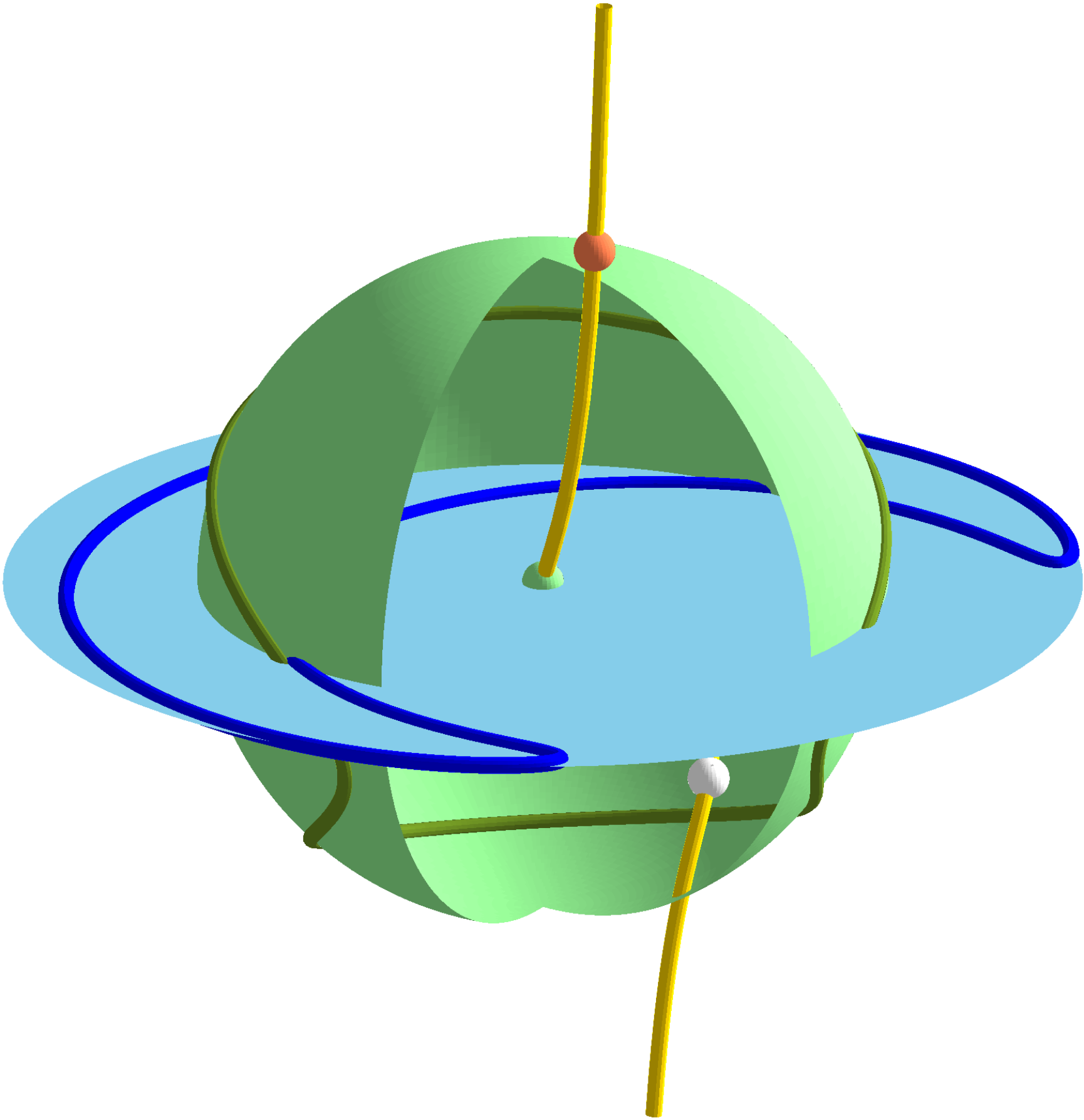}
\begin{small}
\put(70,58){$\mathcal{E}_{c}$}
\put(83,43){$\mathcal{E}_{a_4}$}
\put(51.5,48){$\go P_c$}
\put(61,0.5){$\go P$}
\end{small}         
  \end{overpic}
}
\end{center}
\caption{
For the example given in Eq.\ (\ref{par:ex}) and $p_5=6$ the trajectory of $\go p_t$ with $t=\tfrac{69}{20}$ is illustrated in (a) as 
the intersection curve of $\mathcal{E}_t$ and the sphere $\Phi_t$ centered in $\go P_t$.
The trajectories of the points $\go p_c$ and $\go p_{a_4}$, where this intersection procedure for their generation 
fails, are visualized in (b) as curves on  $\mathcal{E}_c$ and  $\mathcal{E}_{a_4}$, respectively. 
}
  \label{fig-2}
\end{figure}

Based on this geometric property, recovered by line-symmetric motions, 
we can formulate the condition for the self-motion to be real as follows: 
\begin{enumerate}[$\bullet$]
\item
$w\neq 0$:  We can reduce the problem  to a planar one by intersecting the plane spanned by $\go P_0=\go M_1$ and 
$\go c$ with $\mathcal{E}_0$ and the sphere with radius $R_1$ centered in $\go P_0$. 
Now there exists an interval $I_0=]I_-,I_+[$ such that for $R_1\in I_0$ the two resulting conics have at least two distinct real intersection points. 
It is well known  (e.g.\ \cite{conic}) that the computation of the limits $I_-$ and $I_+$ of the reality interval $I_0$ 
leads across an algebraic problem of degree 4 (explicitly solvable). 
Thus for a real self-motion we have to choose $R_1\in I_0$ and solve Eq.\ (\ref{zwei}) for $p_5$.
\item
$w=0$: Now $\go P_0$ coincides with $\go C$ and the interval collapses to the single value $R_1=|a_4|$, which 
can be seen from Eq.\ (\ref{zwei}). Moreover $p_5$ can be chosen arbitrarily.  
\end{enumerate}

These considerations also show that any pentapod of Type 1 and 2 has real self-motions if the leg-parameters are chosen properly. 
Note that this is e.g.\ not the case for some designs of Type 5 pentapods described in \cite[Section 6]{linear_penta}, where it 
was also proven that pentapods with self-motions have a quartically solvable  direct 
kinematics.  
It is possible to use this advantage (closed form solution) of pentapods with self-motions without any risk\footnote{
A self-motion is dangerous as it is uncontrollable and thus a hazard to man and machine.}, by 
designing linear pentapods of Type 1 and Type 2, which are guaranteed free of self-motions within their workspace. 

A sufficient condition for that is that (at least) for one of the five legs $\go p_t\go P_t$ 
of the pentapod the corresponding reality interval $I_t$ is disjoint with the interval of the maximal and minimal leg length 
implied by the mechanical realization. This condition for a self-motion free workspace gets especially simple if 
$\go p_c\go P_c$ is this leg.

\begin{figure}[top]
\begin{center} 
 \begin{overpic}
    [width=60mm]{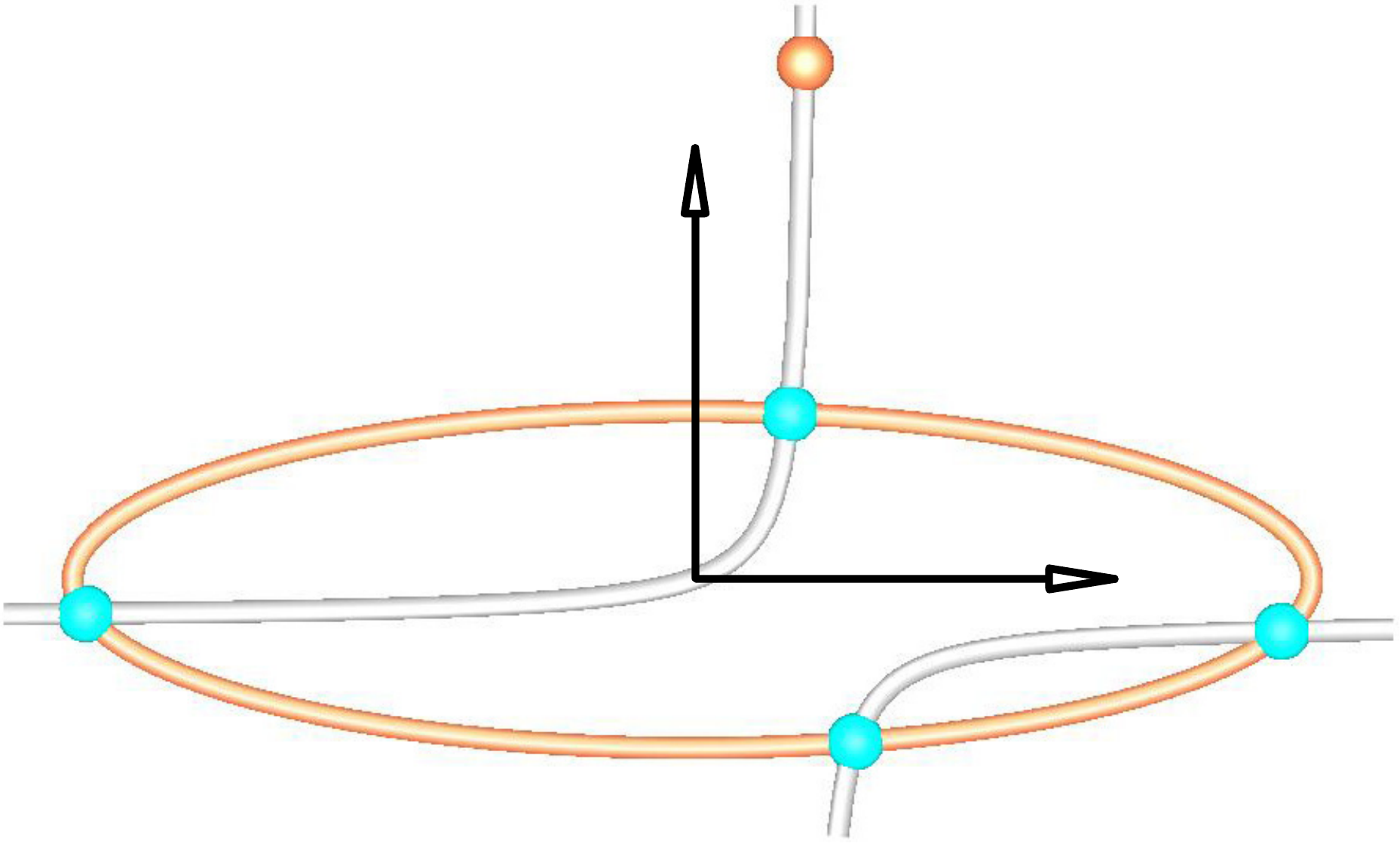}
\begin{small}
\put(60.5,53.5){$\go P_t$}
\put(59,33){$\go F_1$}
\put(1.5,10.5){$\go F_4$}
\put(93.5,9.5){$\go F_2$}
\put(62.5,1){$\go F_3$}
\put(59.5,43){$\ell$}
\put(81,18){$\xi$}
\put(48.2,52.5){$\zeta$}
\put(30,24){$\go k$}
\end{small}         
  \end{overpic} 
\end{center} 
\caption{Lagrange curve $\ell$ intersects the ellipse $\go k$ in the pedal points $\go F_i$ ($i=1,\ldots ,4$) 
of $\go k$ w.r.t.\ $\go P_t$.
}
  \label{fig-4}
\end{figure}     

\begin{example}
We only provide an example for the most general case; i.e.\ Type 1 pentapod with self-motion. 
The parameters are chosen as follows:
\begin{equation}\label{par:ex}
a_4=2, \quad
A=-1,\quad 
C=-5,\quad
a_r=7,\quad
a_c=4,
\end{equation}
with respect to the frames $\mathcal{F}_0^{\prime}$ and $\mathcal{F}^{\prime}$, respectively.
In Fig.\ \ref{fig-4} the planar intersection of $\mathcal{E}_t$ for $t=\tfrac{69}{20}$ 
with the plane spanned by $\go P_t$ and $\go c$ is illustrated. The half-axes lengths of 
this ellipse $\go k$ are 
$k_1=\tfrac{\sqrt{11441}}{20}$ and 
$k_2=\tfrac{29}{20}$.

With respect to a planar Cartesian frame $(\xi,\zeta)$ aligned with the axes of the ellipse $\go k$ the 
point $\go P_t$ has the coordinates $(\xi_t,\zeta_t):=(\tfrac{530}{469081}\sqrt{743665},\tfrac{5300}{1189})$. 
Then the so-called {\it Lagrange curve} $\ell$, which intersects $\go k$ in the pedal points with respect to $\go P_t$, 
has the following parametrization (parameter k) with respect to the planar Cartesian frame 
$(\xi,\zeta)$:
\begin{equation}
\left( k , \tfrac{\kappa_1\zeta_tk}{(\kappa_1-\kappa_2)k+\kappa_2\xi_t}\right)\quad\text{with}\quad
\kappa_1=\tfrac{1}{k_1^2} \quad\text{and}\quad \kappa_2=\tfrac{1}{k_2^2}.
\end{equation}
Plugging this parametrization into the equation of the ellipse $\go k$, given by 
$\kappa_1\xi^2+\kappa_2\zeta^2=1$ yields a quartic equation. 
Therefore the pedal points $F_i$ for $i=1,\ldots 4$ and the corresponding distances $l_i$ to $\go P_t$ 
can be computed explicitly. Then $I_-$ and $I_+$ of $I_t$ are given by the minimal and maximal value of 
$\left\{ l_1,l_2,l_3,l_4\right\}$. For the example under consideration, the corresponding rounded 
numerical values are 
$I_-\approx 3.02850$ and $I_+\approx 7.82039$, respectively. \hfill $\diamond$ 
\end{example}

\begin{remark}
A straight forward computation for the general case (cf.\ example given in the Appendix)  
shows that the basic surface is of degree 5 (see Fig.\ \ref{fig-5}a). 
Moreover the intersection of this basic surface with the ideal plane decomposes into 
a cubic curve and the two tangents $(0:1:\pm i:\RR)$ to the absolute circle. 
According to Krames \cite[Theorem 8 and the subsequent paragraph]{krames_fuss}
a general point of the moving system $\Sigma$ has a trajectory of degree 6 under the corresponding line-symmetric motion 
(see Fig.\ \ref{fig-6}).

Note that these quintic basic surfaces differ from those studied by Krames in \cite{krames_grad5}, 
as during the corresponding line-symmetric motions of the latter  no point of $\Sigma$ can have a spherical path due 
to the following reason: Krames showed in \cite[page 230]{krames_grad5} that every point has a trajectory of degree 4. 
Due to \cite[Theorem 7]{krames} the existence of points with spherical trajectories already implies that 
the basic surface has to be of degree 4; a contradiction. 
\hfill $\diamond$
\end{remark}

\begin{figure}[top]
\begin{center} 
\subfigure[]{
\begin{overpic}
    [width=41mm]{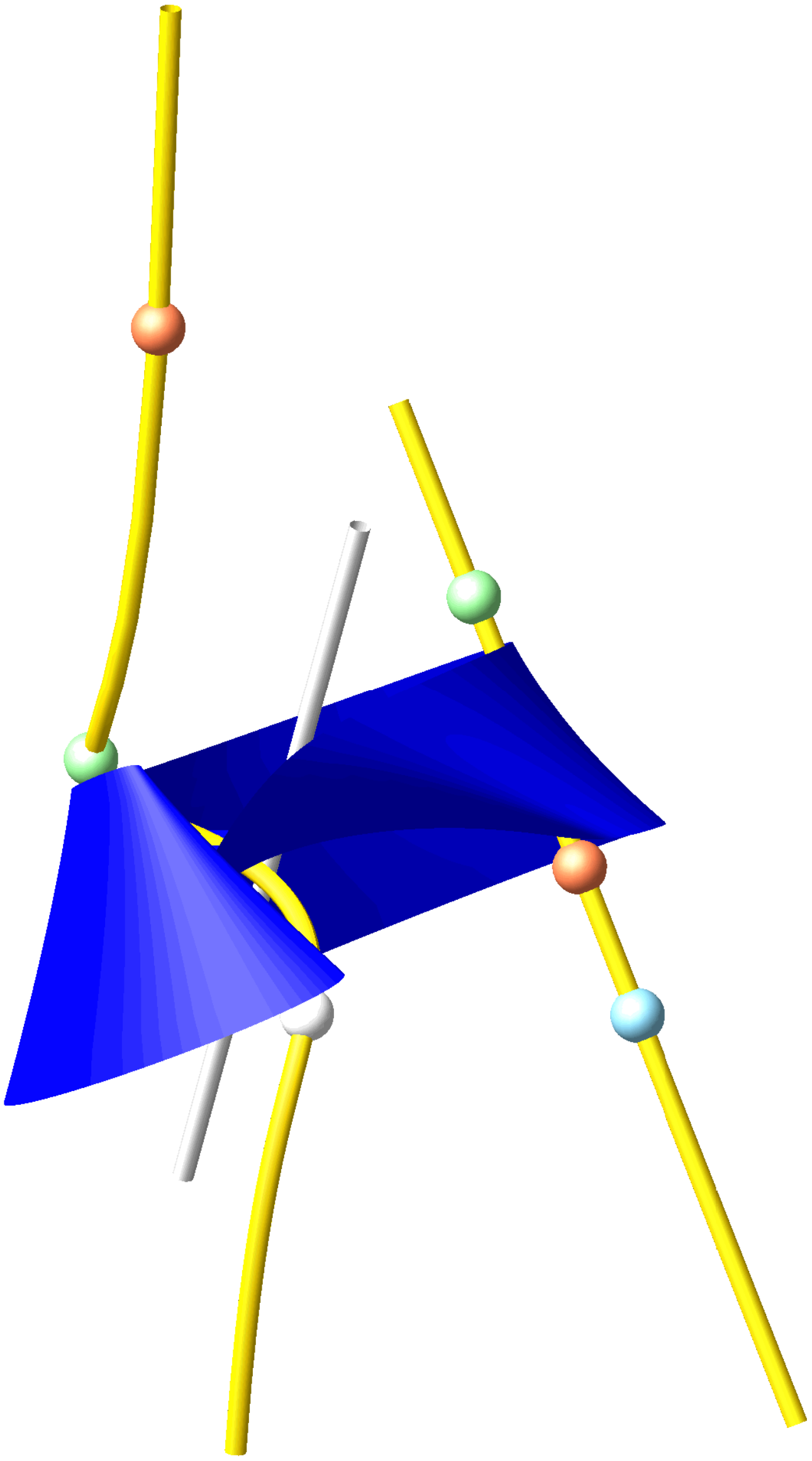}
\begin{small}
\put(35,59){$\overline{\go p}_c$}
\put(46.3,30){$\overline{\go p}_{a_4}$}
\put(42.8,39){$\overline{\go p}_{t}$}
\put(49,3.5){$\overline{\go p}$}
\put(9,20){$\go g$}
\put(23.5,28.5){$\go P_{\infty}$}
\put(7,96){$\go P$}
\put(5,75.5){$\go P_t$}
\put(0.5,49.5){$\go P_c$}
\end{small}             
  \end{overpic}  
}
\hfill
\subfigure[]{ 
 \begin{overpic}
    [width=63mm]{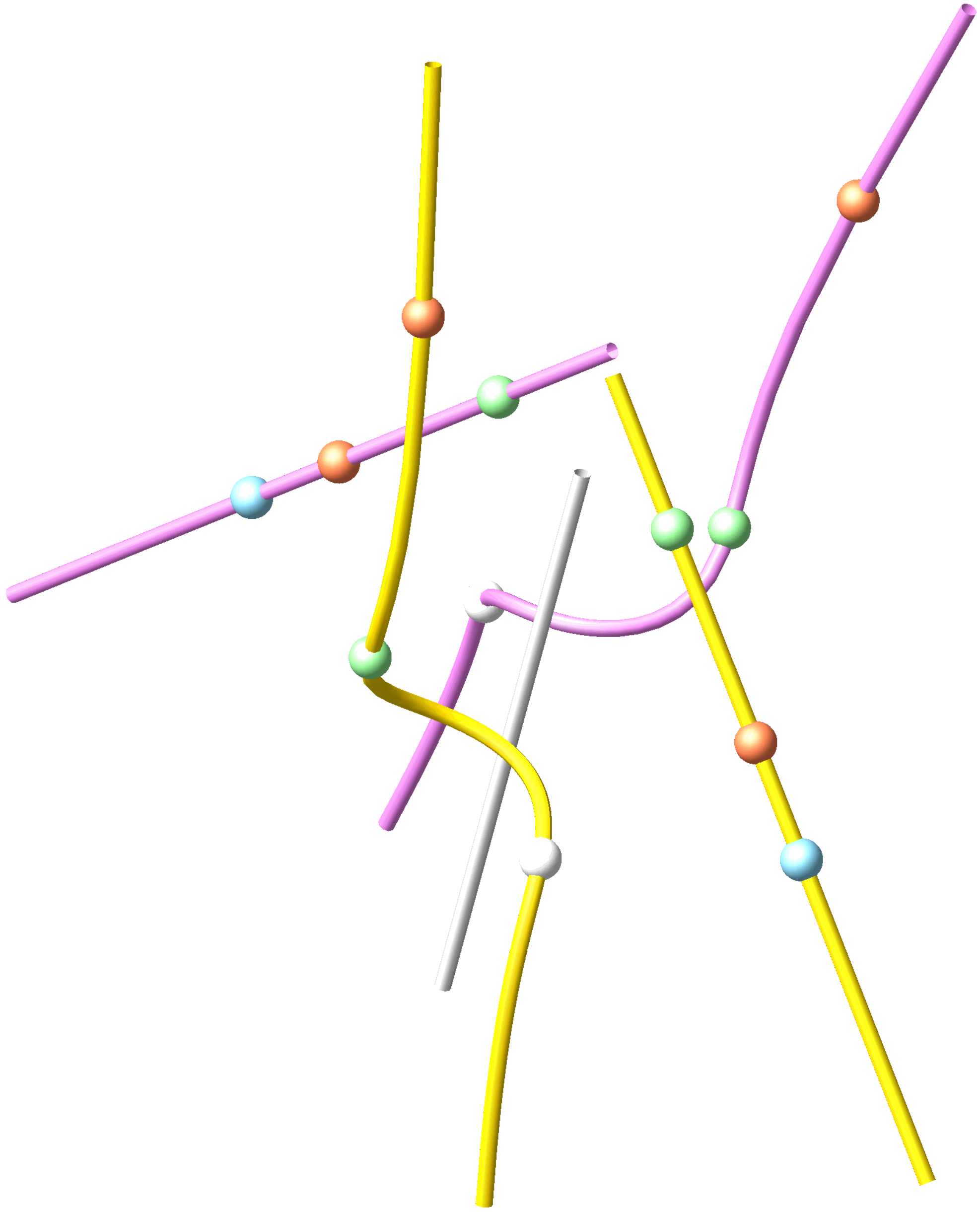}
\begin{small}
\put(50.3,54){$\overline{\go p}_c$}
\put(68.5,29){$\overline{\go p}_{a_4}$}
\put(64,2.2){$\overline{\go p}\in \Sigma_0$}
\put(33,19){$\go g$}
\put(37.5,92){$\go P\in\Sigma_0$}
\put(28.9,72.5){$\go P_t$}
\put(23.9,45){$\go P_c$}
\put(62.5,55){$\overline{\go P}_c$}
\put(73,81.5){$\overline{\go P}_t$}
\put(67,96){$\overline{\go P}\in \Sigma$}
\put(37,53){$\overline{\go P}_{\infty}$}
\put(47,28.5){$\go P_{\infty}$}
\put(65,38.5){$\overline{\go p}_t$}
\put(39.5,63){$\go p_c$}
\put(26,58){$\go p_t$}
\put(18.5,55){$\go p_{a_4}$}
\put(1,48){$\go p\in \Sigma$}
\end{small}             
  \end{overpic} 
 }
\end{center} 
\caption{(a) 
For the example given in Eq.\ (\ref{par:ex}) the basic surface is illustrated for the 
parameter values highlighted in Fig.\ \ref{fig-3}. In addition $\go P$ and $\overline{\go p}$ are visualized, 
where the latter denotes the pose of $\go p$ such that its half-turns about the generators of the 
basic surface yield the self-motion. 
(b) The construction outlined by Krames \cite[page 416]{krames} is illustrated with 
respect to the generator $\go g$ of the basic surface: 
As $\go P_{a_4}$ (resp.\ $\overline{\go p}_{\infty}$) is the real ideal point of $\go P$ 
(resp.\ $\overline{\go p}$),  the trajectory of $\go p_{a_4}$ (resp.\ $\overline{\go P}_{\infty}$) 
under the self-motion $\mu$ is planar. 
The (Mannheim) plane $\in\Sigma$, which contains the point  $\go P_{\infty}$ 
(resp.\  $\overline{\go p}_{a_4}$) and is orthogonal to the direction of the real ideal point 
$\go p_{\infty}$ (resp.\ $\overline{\go P}_{a_4}$) of $\go p$ (resp.\ $\overline{\go P}$) 
in the displayed pose, slides through the point $\go P_{\infty}$ (resp.\  $\overline{\go p}_{a_4}$)
during the complete motion $\mu$. 
}
  \label{fig-5}
\end{figure}

\section{Conclusion and open problem}\label{new:solutions}

Krames \cite[page 416]{krames} outlined the following construction illustrated in Fig.\ \ref{fig-5}b: Assume that $\go p$ is in an arbitrary pose of the 
self-motion $\mu$ with respect to $\go P$, where $\go g$ denotes the generator of the basic surface, which corresponds to this pose. 
Moreover $\overline{\go p}$ and $\overline{\go P}$ are obtained by the reflexion of  $\go p$ and  $\go P$, respectively, with respect to $\go g$, 
where $\overline{\go p}$ belongs to the fixed system $\Sigma_0$ and $\overline{\go P}$ to the moving system $\Sigma$. 
Then under the self-motion $\mu$ also the points of $\overline{\go P}$ are located on spheres with centers on the line $\overline{\go p}$. 

\begin{figure}[top]
\begin{center} 
\begin{overpic}
    [width=95mm]{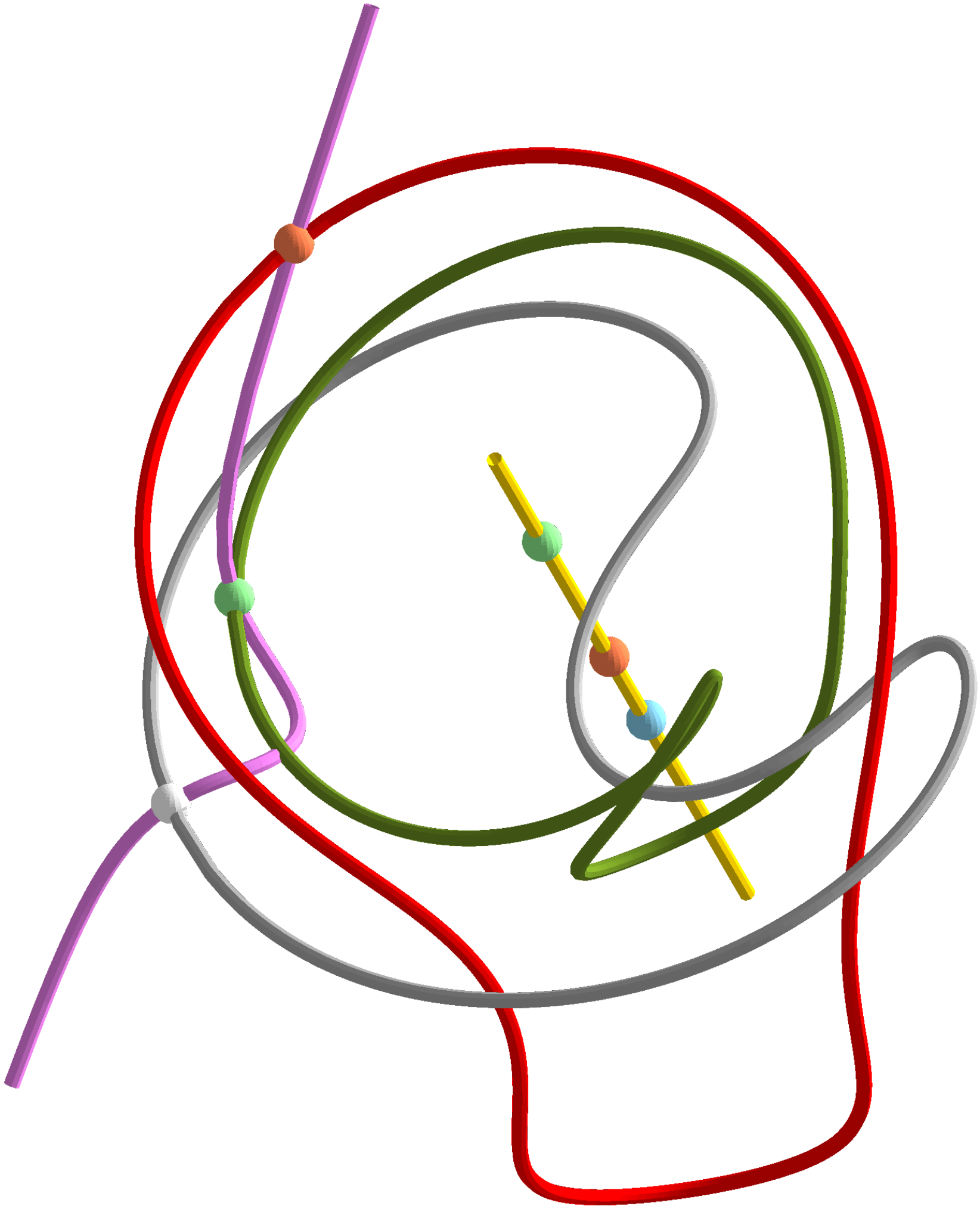}
\begin{small}
\put(40,64){$\overline{\go p}\in\Sigma_0$}
\put(47,55.5){$\overline{\go p}_c$}
\put(53,45.5){$\overline{\go p}_t$}
\put(50,38){$\overline{\go p}_{a_4}$}
\put(3.5,10){$\overline{\go P}\in\Sigma$}
\put(16.5,31.5){$\overline{\go P}_{\infty}$}
\put(21.5,50){$\overline{\go P}_c$}
\put(20,80){$\overline{\go P}_t$}
\end{small}             
  \end{overpic}  
\end{center} 
\caption{
Also the points of the cubic $\overline{\go P}\in\Sigma$ have trajectories of degree 6 under the corresponding line-symmetric motion. 
The paths of $\overline{\go P}_{\infty}$ (gray), $\overline{\go P}_t$ (red) and $\overline{\go P}_c$ (green) are illustrated for the 
line-symmetric motion implied by the basic surface displayed in Fig.\ \ref{fig-5}a. 
Note that in contrast to the Figs.\ \ref{fig-3}, \ref{fig-1}, \ref{fig-2} and \ref{fig-5}, 
the camera location was changed as otherwise the  
planar trajectory of $\overline{\go P}_{\infty}$ would have been collapsed into a line-segment.}
\label{fig-6}
\end{figure}

We can apply this construction for each line-symmetric motion of Theorem \ref{main},
which yields new solutions for the Borel Bricard problem, with the exception of one special case 
where $\go W\in\overline{\go p}$ holds (i.e.\ $h_1=h_2=0$ or $h_0=0$), which was 
already  given by Borel in \cite[Case Fa4]{borel}. 
Moreover for this case Borel noted that beside $\go p$ and $\overline{\go P}$ only two
imaginary planar cubic curves, which are located in the isotropic planes through $\go p$,  
run on spheres. The example given in the Appendix shows that this also 
holds true for the general case. 

Thus the problem remains to determine all line-symmetric motions of Theorem \ref{main} 
where additional real points (beside those of $\go p$ and $\overline{\go P}$) run on spheres. 
Until now the only examples with this property, which are known to the author, are the BB-II motions (cf.\ Section \ref{zurueck}).

\begin{acknowledgement}
This research is funded by Grant No.~P~24927-N25 of the Austrian Science Fund FWF within the
project "Stewart Gough platforms with self-motions".
\end{acknowledgement}

\newpage

\section*{Appendix}

We continue the example given in Eq.\ (\ref{par:ex}) for the parameters
\begin{equation}
h_0=1,\quad
h_1=\tfrac{3}{2},\quad
h_2=\tfrac{1}{2}.
\end{equation}
Now $H=0$ implies the condition $p_5=\tfrac{527538}{82369}$.

\subsubsection*{Degree of basic surface}

As already mentioned at the end of Section \ref{blsm}, 
$(e_1:e_2:e_3:f_1:f_2:f_3)$ are the Pl\"ucker coordinates of the generators of the basic surface 
with respect to the fixed frame $\mathcal{F}_0^{\prime}$ if $e_0=f_0=0$ holds. 
Each generator $\go g$ can now be parametrized by
\begin{equation}\label{eq:gen}
\go g:\quad \Vkt G + \gamma\, \Vkt e \quad \text{with} \quad \Vkt G:=\Vkt f\kreuz\Vkt e,
\end{equation}
where $\Vkt e:=(e_1,e_2,e_3)^T$ and $\Vkt f:=(f_1,f_2,f_3)^T$. Note that $\Vkt G$ is the coordinate vector (with respect to $\mathcal{F}_0^{\prime}$) 
of the pedal point of $\go g$ with respect to the origin of $\mathcal{F}_0^{\prime}$. 
As $f_0,\ldots ,f_3$ are computed as in Theorem \ref{main}, the parametrization of points of the basic surface, which have 
fixed coordinates $(X,Y,Z)^T$, 
only depends on $e_1$, $e_2$, $e_3$ and $\gamma$. Moreover the condition $F=0$ (implied by $f_0=0$) has to hold with:
\begin{equation} \label{yo}
\begin{split}
F:=&1561e_1^3
-1708e_2^3
-2870e_3^3
-1708e_1^2e_2
+2173e_1^2e_3 \\
&+1561e_1e_2^2
+2173e_2^2e_3
-8525e_1e_3^2 
-5070e_2e_3^2.
\end{split}
\end{equation}
From $F=0$ and the three equations $(X,Y,Z)^T=\Vkt G + \gamma\, \Vkt e$ the parameters 
$e_1$, $e_2$, $e_3$ and $\gamma$ can be eliminated (by Gr\"obner basis elimination techniques) 
and we finally end up with the following implicit representation of the basic surface 
with respect to $\mathcal{F}_0^{\prime}$:
 
\begin{small}
\begin{equation*}
\begin{split}
&46930000-59658690X^2YZ^2-187188000X-211012100Z-214586000Y \\
&-100313675XY^2Z^2+51139382X^2Y^2Z+199127898X^2YZ-109272380XYZ^2 \\
&+507256879XY^2Z+812997056XYZ+18368287X^5+156741893X^4+193802233X^3 \\
&-138195885X^2+323499460Y^2-423262296Y^3-191807553Y^4-20098036Y^5 \\
&+115381175Z^2-53539850Z^3-20098036X^4Y+36736574X^3Y^2-40196072X^2Y^3 \\
&+18368287XY^4-5063828X^3Y-35065660X^2Y^2-5063828XY^3+378103208X^2Y \\
&-582502914XY^2+195340960XY+25569691X^4Z-100313675X^3Z^2-33771290X^2Z^3 \\
&+507256879X^3Z-176781955X^2Z^2-84016380XZ^3+1163441469X^2Z+34883415XZ^2 \\
&+556424125XZ+13179040YZ^3+639976950YZ-33771290Y^2Z^3-28318339Y^2Z \\
&+199127898Y^3Z-59658690Y^3Z^2+25569691Y^4Z+96060335Y^2Z^2-41015170YZ^2=0.
\end{split}
\end{equation*} 
\end{small}
It can easily be seen that this is a polynomial equation of degree 5. 
If we intersect this quintic surface with the ideal plane (by homogenizing it 
with the homogenization variable $W$ and then setting $W=0$) the intersection curve splits 
up into the two tangents to the absolute circle $X\pm Yi=0$ and the cubic curve
given by the direction vectors of the basic surface's generators. 
Therefore this cubic curve in the ideal plane is given by $F=0$ (with $F$ of Eq.\ (\ref{yo})) if one substitutes 
$e_1$ by $X$, $e_2$ by $Y$ and $e_3$ by $Z$, respectively. 

As the ideal curve of the basic surface contains two tangents of the absolute circle,  
a general point of the moving system $\Sigma$ has a trajectory of degree 6 
under the corresponding line-symmetric motion according to Krames \cite[Theorem 8 and the subsequent paragraph]{krames_fuss}. 

\begin{remark}
It should be mentioned that all basic surfaces and 
trajectories can be parametrized for the general case due to Remark \ref{rem:par}, 
which was used for the generation of Figs.\ \ref{fig-5}a and 
\ref{fig-6}, respectively. \hfill $\diamond$
\end{remark}

\subsubsection*{Imaginary planar cubic curves}

Now we determine the set of all points $\in\Sigma$, which are running on spheres during the above given 
line-symmetric motion.

By using the Study parametrization of Euclidean displacements, the condition that a point of $\Sigma$ with coordinates $(x,y,z)^T$ 
(with respect to the moving frame $\mathcal{F}^{\prime}$) is located 
on a sphere centered in a point with coordinates $(X,Y,Z)^T$ (with respect to $\mathcal{F}_0^{\prime}$)
is a quadratic homogeneous equation according to Husty \cite{manfred}. 
For $e_0=0$ this so-called sphere condition $\Lambda=0$ reads as: 
\begin{equation*}
\begin{split}
\Lambda:=\,\, 
&(x^2+y^2+z^2+X^2+Y^2+Z^2-R^2)(e_1^2+e_2^2+e_3^2) +4(f_0^2+f_1^2+f_2^2+f_3^2) \\
&-2(xX-yY-zZ)e_1^2 
+2(xX-yY+zZ)e_2^2
+2(xX+yY-zZ)e_3^2 \\
&-4(yX+xY)e_1e_2
-4(zX+xZ)e_1e_3 
-4(zY+yZ)e_2e_3 \\
&-4(x+X)(e_3f_2-e_2f_3)
-4(y+Y)(e_1f_3-e_3f_1) 
-4(z+Z)(e_2f_1-e_1f_2)\\
&+4(x-X)e_1f_0 
+4(y-Y)e_2f_0 
+4(z-Z)e_3f_0, 
\end{split}
\end{equation*}
where $R$ denotes the radius of the sphere.  

The corresponding Maple Worksheet of the following computations can be downloaded as {\tt mws} file and {\tt pdf} file 
from the links provided in the footnote.\footnote{The links are {\tt http://www.dmg.tuwien.ac.at/nawratil/linearpentapod.mws} and 
{\tt http://www.dmg.tuwien.ac.at/nawratil/linearpentapod.pdf}, respectively.}  
We compute $f_0,\ldots ,f_3$ as in Theorem \ref{main} and plug them into $\Lambda$. 
The numerator of the resulting expression is denoted by $\Gamma[161]$, which is of degree $4$ 
in $e_1,e_2,e_3$. Therefore we can make an ansatz of the form:
\begin{equation}
F(\eta_1e_1+\eta_2e_2+\eta_3e_3)-\Gamma=0.
\end{equation}
As this equation has to be fulfilled identically for all $e_1,e_2,e_3$, the coefficients 
with respect to these variables have to vanish. This results in a set of 15 equations from which we  
eliminate (by Gr\"obner basis elimination techniques) the unknowns $\eta_1,\eta_2,\eta_3,X,Y,Z,R$. 
We finally end up with an ideal $I$ in  $x,y,z$ of degree 10 and dimension 1; i.e.\ there exists a curve of 
degree 10, those points run on spheres during the line-symmetric motion. 

This curve of degree 10 has to split in at least three components as we already know of the existence of a linear and a cubic component, namely $\go p$ and 
$\overline{\go P}$, respectively. Now we show that the remaining sextic splits up into conjugate complex planar 
cubics, which are located in the isotropic planes $\varepsilon_1,\varepsilon_2\in\Sigma$ through $\go p$. These isotropic planes are given by:
\begin{equation} 
\varepsilon_{1,2}:\quad
91x-84y-126z-122\pm(147y-98z+714)i=0, 
\end{equation} 
with respect to $\mathcal{F}^{\prime}$.
We express $x$ from it and plug it into the generating polynomials of $I$. It can easily be checked that 
the greatest common divisor of all resulting expression equals $(102+21y-14z)(\Re\pm \Im i)$ with
\begin{equation}
\begin{split}
\Re:=&
274400y^3+3374238yz+13169366z+3927840y^2-30870y^2z \\
&-5472908z^2-1165514yz^2+15910300y+113190z^3+17761620,
\end{split}
\end{equation}
\begin{equation}
\begin{split}
\Im:=
&984410y^2z-1840195y^2+9573816yz-115248yz^2-15809850y \\
&-29479660+817369z^2+20061237z-408170z^3.
\end{split}
\end{equation}
The condition $102+21y-14z=0$ yields the line $\go p$ and 
$\Re\pm \Im i=0$ determines the imaginary planar cubic curves. 
This result closes the Appendix.
\end{document}